\newcommand\ourM{STNet}
\newtheorem{theorem}{Theorem}[section]
\newtheorem{proposition}[theorem]{Proposition}
\title{STNet: Spectral Transformation Network for \\Solving Operator Eigenvalue Problem}
\author{
Hong~Wang\textsuperscript{1,2,3}\thanks{Equal contribution.}\, ,\,
Yixuan~Jiang\textsuperscript{1,4}\footnotemark[1]\, ,\,
Jie~Wang\textsuperscript{1,2,3}\thanks{Corresponding author.}\, ,\,
Xinyi~Li\textsuperscript{1},\ 
Jian~Luo\textsuperscript{1,2,3},\ \\
\textbf{
Huanshuo~Dong\textsuperscript{1,2,3}\ 
}\\
\textsuperscript{1} University of Science and Technology of China\\
\textsuperscript{2} CAS Key Laboratory of Technology in GIPAS, University of Science and Technology of China\\
\textsuperscript{3} MoE Key Laboratory of Brain-inspired Intelligent Perception and Cognition, University of Sci-\\ence and Technology of China\\
\textsuperscript{4} Tsinghua University\\
{\small\tt wanghong1700@mail.ustc.edu.cn,} 
{\small\tt jiangyix25@mails.tsinghua.edu.cn,} 
{\small\tt jiewangx@ustc.edu.cn}\\
}
\begin{document}

\maketitle

\begin{abstract}
Operator eigenvalue problems play a critical role in various scientific fields and engineering applications, yet numerical methods are hindered by the curse of dimensionality. Recent deep learning methods provide an efficient approach to address this challenge by iteratively updating neural networks. 
These methods' performance relies heavily on the spectral distribution of the given operator: larger gaps between the operator's eigenvalues will improve precision, thus tailored spectral transformations that leverage the spectral distribution can enhance their performance. Based on this observation, we propose the {\textbf{S}pectral \textbf{T}ransformation \textbf{Net}work} (\textbf{STNet}).
During each iteration, STNet uses approximate eigenvalues and eigenfunctions to perform spectral transformations on the original operator, turning it into an equivalent but easier problem.
Specifically, we employ {deflation projection} to exclude the subspace corresponding to already solved eigenfunctions, thereby reducing the search space and avoiding converging to existing eigenfunctions. 
Additionally, our {filter transform} magnifies eigenvalues in the desired region and suppresses those outside, further improving performance.
Extensive experiments demonstrate that STNet consistently outperforms existing learning-based methods, achieving state-of-the-art performance in accuracy \footnote[1]{Our code is available at https://github.com/j1y1x/STNet.}.

\end{abstract}

\section{Introduction}\label{intro}

The operator eigenvalue problem is a prominent focus in many scientific fields~\citep{elhareef2023physics, buchan2013pod,cuzzocrea2020variational, pfau2023natural} and engineering applications~\citep{diao2023spectral, chen2000integral,DBLP:journals/jcphy/GaoW23}.
However, traditional numerical methods are constrained by the curse of dimensionality, as the computational complexity increases quadratically or even cubically with the mesh size~\citep{watkins2007matrix}. 

A promising alternative is using neural networks to approximate eigenfunctions~\citep{pfau2018spectral}. 
These approaches reduce the number of parameters by replacing the matrix representation with a parametric nonlinear representation via neural networks.
By designing appropriate loss functions, it updates parameters to approximate the desired operator eigenfunctions. These methods only require sampling specific regions without designing a discretization mesh, significantly reducing the algorithm design cost and unnecessary approximation errors~\citep{he2022mesh}.
Moreover, neural networks generally exhibit stronger expressiveness than linear matrix representations, requiring far fewer sampling points for the same problem compared to traditional methods~\citep{nguyen2020wide,gao2025CROP}. 

\begin{figure}[t]
  \centering
\hspace{-0.5cm}
  \begin{minipage}{0.4\textwidth}
    \centering
    \includegraphics[width=1\linewidth]{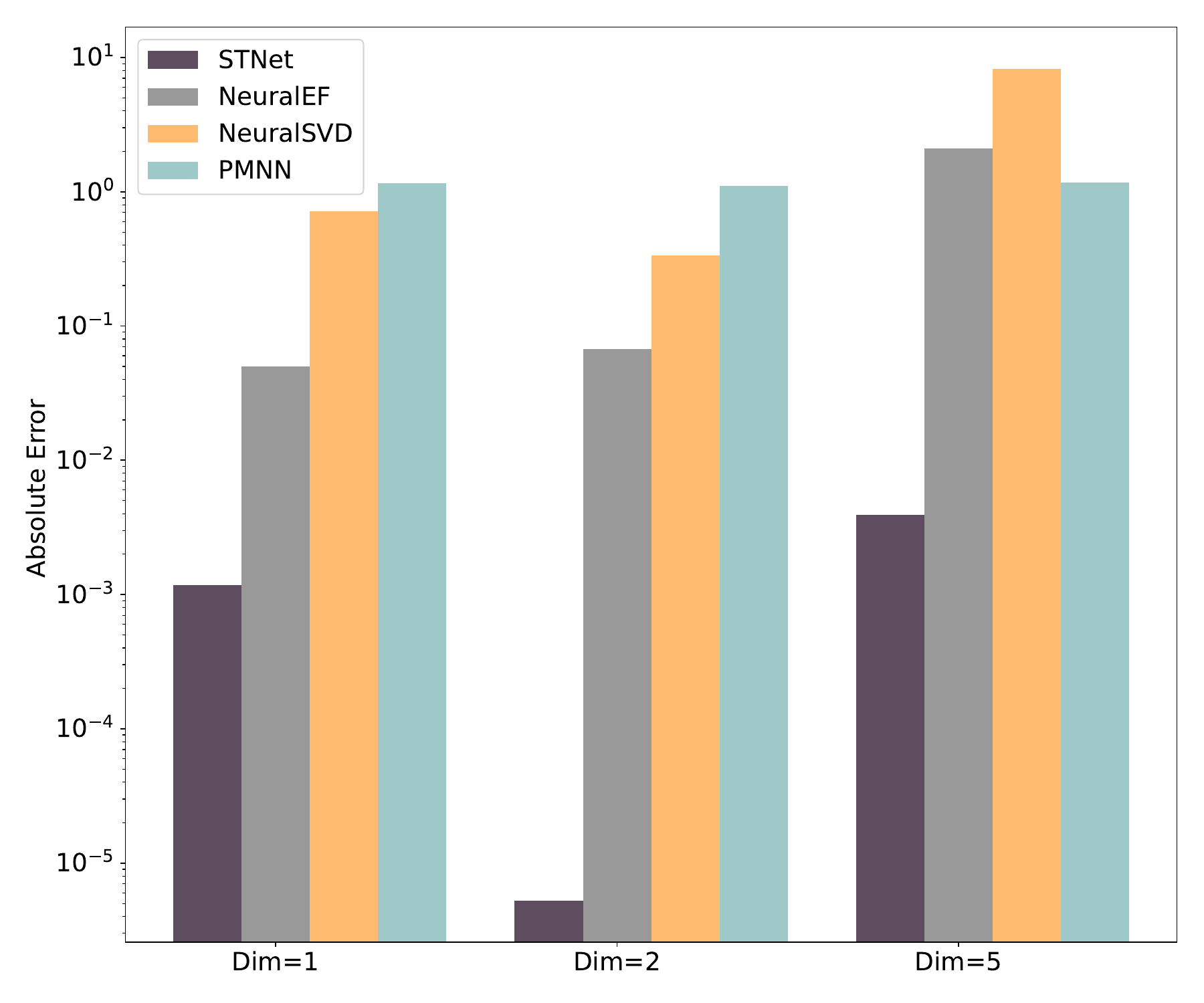}
  \end{minipage}
  \begin{minipage}{0.6\textwidth}
    \centering
    \includegraphics[width=1\linewidth]{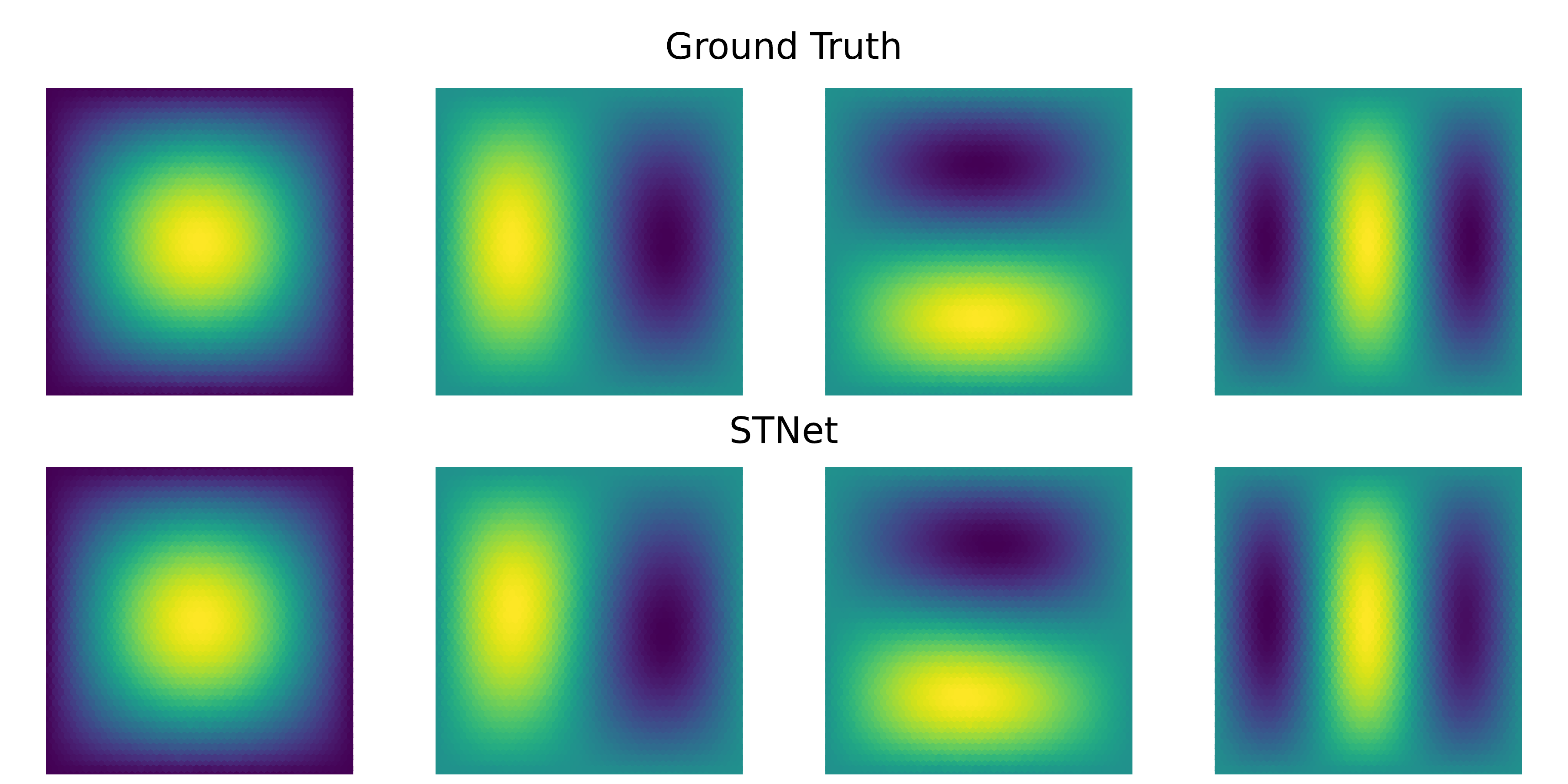}
  \end{minipage}
  \caption{\textbf{Left.} Absolute error results of zero eigenvalues for the  Fokker-Planck operator computed using various algorithms, the $x$ axis represents the operator dimension.
  \textbf{Right.} Comparison of the eigenfunctions of the 2D Harmonic operator computed by {\ourM} and the ground truth.
  }
  \label{fig:images}
\end{figure}


Despite these advantages, the performance of such methods strongly depends on the operator’s spectral distribution: if the target eigenvalues differs greatly to each other, the algorithm converges much more faster; otherwise, it may suffer from inefficient iterations.
To improve convergence, spectral transformations can be designed based on the spectral distribution, reformulating the original problem into an equivalent but more tractable one. However, since the real spectrum of the operator is initially unknown, existing approaches do not optimize spectral properties through such transformations.

To address this limitation, we propose the {Spectral Transformation Network} ({\ourM}). By exploiting approximate eigenvalues and eigenvectors learned during the iterative process, {\ourM} applies spectral transformations to the original operator, modifying its spectral distribution and thereby converting it into an equivalent problem that converges more easily.
Concretely, we employ {deflation projection} to remove the subspace corresponding to already computed eigenfunctions. This not only narrows the search space but also prevents subsequent eigenfunctions from collapsing into the same subspace. 
Meanwhile, our {filter transform} amplifies eigenvalues within the target region and suppresses those outside it, promoting rapid convergence to the desired eigenvalues.
Extensive experiments demonstrate that {\ourM} significantly surpasses existing methods based on deep learning, achieving state-of-the-art performance in accuracy. 
Figure \ref{fig:images} illustrates a selection of the experimental results.




\section{Preliminaries}\label{BK}

\subsection{Operator Eigenvalue Problem}
We primarily focus on the eigenvalue problems of differential operators, such as \(  \frac{\partial }{\partial x} + \frac{\partial }{\partial y},  \Delta \), etc.
Mathematically, an operator \(\mathcal{L}: \mathcal{H}_1 \rightarrow \mathcal{H}_2\) is a mapping between two Hilbert spaces. 
Considering a self-adjoint operator \(\mathcal{L}\) defined on a domain \(\Omega \subset \mathbb{R}^D\), the operator eigenvalue problem can be expressed in the following form~\citep{evans2022partial}:
\begin{equation}
    \mathcal{L} v = \lambda v \quad \text{in } \Omega, 
\label{eq:PDE}
\end{equation}
where \(\Omega \subseteq \mathbb{R}^D\) serves as the domain; 
\(v\) is the eigenfunction and \(\lambda\) is the eigenvalue.
Typically, it is often necessary to solve for multiple eigenvalues, \(\lambda_i, i = 1, \dots, L \). 

\subsection{Power Method}

The power method is a classical algorithm designed to approximate the eigenvalue of an operator \(\mathcal{L}\) in the vicinity of a given shift \(\sigma\). 
By applying the shift \(\sigma\) (often chosen as an approximation to the target eigenvalue), the original eigenvalue problem is effectively transformed into an equivalent problem for the new operator \((\mathcal{L} - \sigma I)^{-1}\). 
In each iteration, the current approximate solution is multiplied by this new operator, thereby amplifying the component associated with the eigenvalue closest to \(\sigma\). 
This iterative procedure converges to the desired eigenvalue, as shown in Algorithm~\ref{alg:power_method}~\citep{golub2013matrix}. 

\begin{algorithm}[htb]
   \caption{Power Method for the Operator \(\mathcal{L}\)}
   \label{alg:power_method}
\begin{algorithmic}[1]  
   \STATE {\bfseries Input:} Operator \(\mathcal{L}\), shift \(\sigma\), initial guess \(v^0\), maximum iterations \(k_{\text{max}}\), convergence threshold \(\epsilon\).
   \STATE {\bfseries Output:} Eigenvalue \(\lambda\) near \(\sigma\).
   \STATE \(v^0 = v^0 / \|v^0\|\) .
   \FOR{$k = 1$ {\bfseries to} $k_{\text{max}}$}
       \STATE \(v^k = p^k / \|p^k\|\) and solve \((\mathcal{L} - \sigma I)\,p^k = v^{k-1}\).
       \IF{\(\|v^k - v^{k-1}\| < \epsilon\)}
           \STATE \(\lambda = \frac{\langle v^k , \mathcal{L} v^k \rangle}{\langle v^k, v^k\rangle}\) and {\bfseries break}. 
       \ENDIF
   \ENDFOR
\end{algorithmic}
\end{algorithm}
In each iteration, solving the linear system \((\mathcal{L} - \sigma I)\,p^k = v^{k-1}\) is equivalent to applying the operator \((\mathcal{L} - \sigma I)^{-1}\) to \(v^{k-1}\). 
Afterward, normalizing \(v^k\) helps maintain numerical stability. Convergence is typically assessed by evaluating the error \(\|v^k - v^{k-1}\|\), ensuring that the final solution meets the desired accuracy.
The fundamental reason for the convergence of the power method lies in the repeated application of \((\mathcal{L} - \sigma I)^{-1}\), which progressively magnifies the component of $v^k$ in the direction of the eigenfunction with eigenvalue closest to \(\sigma\). 
For a more detailed introduction to the power method, please refer to the Appendix~\ref{power_method}.

\subsection{Deflation Projection}

The deflation technique plays a critical role in solving eigenvalue problems, particularly when multiple distinct eigenvalues need to be computed.
Deflation projection is an effective deflation strategy that utilizes known eigenvalues and corresponding eigenfunctions to modify the structure of the operator, thereby simplifying the computation of remaining eigenvalues~\citep{saad2011numerical}.


The core idea of deflation projection is to construct an operator \(\mathcal{P}\), often defined as \(\mathcal{P}(u) = \langle u, v_1\rangle v_1\) where \(v_1\) is a known eigenfunction. This operator is then used to modify the original operator \(\mathcal{L}\) into a new operator:
\begin{equation}
\mathcal{B} = \mathcal{L}-\lambda_1\mathcal{P}.
\end{equation}
In \(\mathcal{B}\), the eigenvalue \(\lambda_1\) associated with \(v_1\) is effectively removed from the spectrum of \(\mathcal{L}\).  
Additional details on deflation projection can be found in Appendix~\ref{A:Deflation}.

\subsection{Filter Transform}

The filter transform is widely used in numerical linear algebra to enhance the accuracy of eigenvalue computations \citep{saad2011numerical}. 
By constructing a suitable filter function \(F(\mathcal{L})\), the operator \(\mathcal{L}\) undergoes a spectral transformation that amplifies the target eigenvalues and suppresses the irrelevant ones.
The filter transform can effectively highlight the desired spectral region without altering the corresponding eigenfunctions \citep{watkins2007matrix}. 
Further details on the filter transform can be found in Appendix~\ref{A:Filter}.

\section{Method}\label{method}

\subsection{Problem Formulation}

We consider the operator eigenvalue problem for a differential operator \(\mathcal{L}\) defined on a domain \(\Omega \subset \mathbb{R}^D\).
Our goal is to approximate the \(L\) eigenvalues \(\lambda_i\) near a given shift \(\sigma\) and their corresponding eigenfunctions \(v_i\), satisfying
\begin{equation}
\mathcal{L} \, v_i = \lambda_i \, v_i, 
\quad 
i = 1, 2, \ldots, L.
\end{equation}
To achieve this, we employ \(L\) neural networks parameterized by \(\theta_i\). Each neural network \(NN_{\mathcal{L}}(\cdot; \theta_i)\) maps the domain \(\Omega\) into \(\mathbb{R}\), providing an approximation of the eigenfunction \(v_i\):
\begin{equation}
NN_{\mathcal{L}}(\cdot;\theta_i): \Omega \rightarrow \mathbb{R}, \quad i = 1, 2, \ldots, L.
\end{equation}
In order to represent both the functions and the operators numerically, we discretize \(\Omega\) by uniformly randomly sampling \(N\) points:
\begin{equation}
S \equiv \{\bm{x}_j = (x_j^1, \dots, x_j^D) \mid \bm{x}_j \in \Omega, \, j = 1, 2, \dots, N\},
\end{equation}
Correspondingly, each neural network \(NN_{\mathcal{L}}(\cdot;\theta_i)\) output a vector \(\bm{Y}_i \in \mathbb{R}^{N}\), which approximate the values of the eigenfunction \(\tilde{v}_i(\cdot) = NN_{\mathcal{L}}(\cdot; \theta_i)\) at these sampled points:
\begin{equation}
\tilde{v}_i(\bm{x}_j) \equiv \bm{Y}_i(j), \quad i = 1, 2, \ldots, L, \quad j = 1, 2, \ldots, N.
\end{equation}
The approximate eigenvalues \(\tilde{\lambda}_i\) are then obtained by applying \(\mathcal{L}\) to the computed eigenfunctions \(\tilde{v}_i\):
\begin{equation}
\tilde{\lambda}_i \equiv \frac{\langle\tilde{v}_i , \mathcal{L} \tilde{v}_i \rangle}{\langle\tilde{v}_i , \tilde{v}_i \rangle} ,\quad i=1,2,\ldots,L.
\end{equation}
Here, the differential operator \(\mathcal{L}\) acts on the functions via automatic differentiation.
We iteratively update the neural network parameters \(\theta_i\) using gradient descent, aiming to minimize the overall residual. Specifically, we formulate the following optimization problem:
\begin{equation}
\label{objection}
\min_{\theta_i \in \Theta} \frac{1}{N} \sum_{i=1}^{L} \sum_{j=1}^{N} [ \tilde{v}_i(\bm{x}_j) - {v}_i(\bm{x}_j) ]^2,
\end{equation}
where \(\Theta\) denotes the parameter space of the neural networks. This approach does not require any training data, as it relies solely on satisfying the differential operator eigenvalue equations over the domain \(\Omega\).
Finally, this procedure provides approximations \( \tilde{\lambda}_i\) of the true eigenvalues \(\lambda_i\), \(i = 1, \dots, L\).

\begin{figure*}[t]
    \centering
    \includegraphics[width=1\columnwidth]{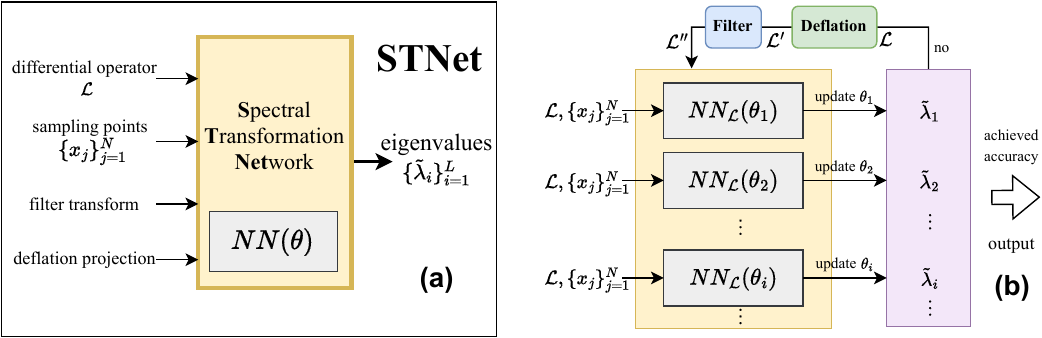}
    \caption{Overview of the \textbf{STNet}. \textbf{(a)} Introduction to the inputs and outputs. \textbf{(b)} {\ourM} comprises multiple neural networks, each tasked with predicting distinct eigenvalues. If the accuracy of the solution reaches the expectation, then {\ourM} will output the result.}
    \label{fg_method1}
\end{figure*}

\begin{algorithm}[htb]
    \caption{Spectral Transformation Network}
    \label{alg:PDEEigenSolver}
\begin{algorithmic}[1]  
    \STATE {\bfseries Input:} Operator $\mathcal{L}$ over domain $\Omega \subset \mathbb{R}^D$, shift $\sigma$, number of sampling points $N$, number of eigenvalues $L$, learning rate $\eta$, convergence threshold $\epsilon$, maximum iterations $k_{\text{max}}$.
    \STATE {\bfseries Output:} Eigenvalues \(\tilde{\lambda}_i, \quad i=1,\dots,L\).
    \STATE Uniformly randomly sample $N$ points $\{\bm{x}_j\}$ in $\Omega$ to form dataset $S$.
    \STATE Randomly initialize the network parameters \(\theta_i^0\), as well as the normalized \(\tilde{v}_i\), and set \(\tilde{\lambda}_i = \sigma\), \(\quad i = 1,\dots,L\).
    \FOR{$k = 1$ {\bfseries to} $k_{\text{max}}$}
        \STATE $\tilde{v}_i^{k}(\bm{x}_j) = NN_{\mathcal{L}}(\bm{x}_j; \theta_i^k), \bm{x}_j \in S$.
        \STATE $\mathcal{L}'_i = D_i(\mathcal{L}),\quad i = 1, \dots, L$ \quad // \textbf{Deflation Projection}
        \STATE $\mathcal{L}''_i = F_i(\mathcal{L'}),\quad i = 1, \dots, L$ \quad // \textbf{Filter Transform}
        \STATE $ \tilde{u}_i^{k}(\bm{x}_j) = \frac{\mathcal{L}''_i \tilde{v}_i^{k}(\bm{x}_j)}{\left\| \mathcal{L}''_i \tilde{v}_i^{k}(\bm{x}_j) \right\|}, \quad i = 1, \dots, L. $
        \STATE $ \text{Loss}_{i}^k = \frac{1}{N}\sum_{j=1}^N[\tilde{v}_i^{k-1}(\bm{x}_j)-\tilde{u}_i^{k}(\bm{x}_j)]^2,\quad i=1,\dots,L. $
        \STATE $\theta_i^{k+1} = \theta_i^k - \eta\,\nabla_{\theta_i}\,\text{Loss}_{i}^k,\quad i=1,\dots,L$ \quad // \textbf{Parameter Update}
        \FOR{$i = 1$ {\bfseries to} $L$}
            \IF{$\text{Loss}_{i}^k < \epsilon_i$}
                \STATE $\epsilon_i = \text{Loss}_{i}^k$,
                $\tilde{\lambda}_i = \frac{\langle \tilde{v}_i^{k}, \mathcal{L}\tilde{v}_i^{k} \rangle}{\langle \tilde{v}_i^{k},\tilde{v}_i^{k}\rangle}, \tilde{v}_i = \tilde{v}_i^{k}$.
            \ENDIF
        \ENDFOR
        \IF{$\epsilon_i < \epsilon$ for all $i$}
            \STATE Convergence achieved; \textbf{break}.
        \ELSE
            \STATE Update deflation projection and filter function:  $D_i, F_i,\quad i = 1, \dots, L $.
        \ENDIF
    \ENDFOR
\end{algorithmic}
\end{algorithm}

\subsection{Spectral Transformation  Network}

Inspired by the power method and the power method neural network~\citep{yang2023neural}, we propose {\ourM} to solve eigenvalue problems, as shown in Figure~\ref{fg_method1}. In {\ourM}, we replace the function \(v^k\) from the \(k\)-th iteration of the power method with \(\tilde{v}_i^k(x) \equiv NN_{\mathcal{L}}(x; \theta_i^k)\), where each neural network is implemented via a multilayer perceptron (MLP). 
Since neural networks cannot directly perform the inverse operator \((\mathcal{L} - \sigma I)^{-1}\), we enforce \((\mathcal{L} - \sigma I) \tilde{v}^k \approx \tilde{v}^{k-1}\) through a suitable loss function. 
The updated parameters \(\theta_i^k \rightarrow \theta_i^{k+1}\) then yield \(\tilde{v}^{k+1} = NN_{\mathcal{L}}(x; \theta_i^{k+1})\). Algorithm~\ref{alg:PDEEigenSolver} shows the detailed procedure of \ourM.

Classical power method convergence is strongly influenced by the spectral distribution of the operator, which is unknown initially and thus difficult to optimize against directly.
However, as the iterative process starts, we can get additional information—such as already computed eigenvalues and eigenfunctions. 
Using these results for the spectral transformation of the original operator can greatly improve subsequent power-method iterations. In Algorithm ~\ref{alg:PDEEigenSolver}, we introduce two modules to improve the performance.
Their impact on the operator spectrum is shown in Figure~\ref{fg_method2}.





\begin{itemize}
    \item \textbf{Deflation projection} uses already computed eigenvalues and eigenfunctions to construct a projection that excludes the previously resolved subspace, preventing convergence to known eigenfunctions and reducing the search space.
    \item \textbf{Filter transform} employs approximate eigenvalues to construct a spectral transformation (filter function) that enlarges the target eigenvalue region and suppresses others, boosting the efficiency of {\ourM}.
\end{itemize}

\subsubsection{Deflation Projection}


Suppose we have already approximated the eigenvalues 
\(\tilde{\lambda}_1, \tilde{\lambda}_2, \dots, \tilde{\lambda}_{i-1}\)
and their corresponding eigenfunctions 
\(\tilde{v}_1, \tilde{v}_2, \dots, \tilde{v}_{i-1}\).
To compute the \(i\)-th eigenfunction, we focus on the residual subspace orthogonal to the subspace spanned by these previously computed eigenfunctions.
The deflated projection is then defined as
\begin{equation}
D_i(\mathcal{L}) \;\equiv\; \mathcal{L} \;-\; \mathcal{Q}_{i-1}\,\Sigma_{i-1}\,\mathcal{Q}_{i-1}^\top.
\end{equation}
Here \(\mathcal{Q}_{i-1}\) maps each vector \((\alpha_1,\dots,\alpha_{i-1}) \in \mathbb{R}^{i-1}\) to the function \(\sum_{k=1}^{i-1} \alpha_k\,\tilde{v}_k\), thus reconstructing functions from the span of \(\{\tilde{v}_1,\dots,\tilde{v}_{i-1}\}\). \(\mathcal{Q}_{i-1}^\top\) is the transpose of \(\mathcal{Q}_{i-1}\). And \(\Sigma_{i-1}\) is a diagonal operator that scales each \(\tilde{v}_k\) by its corresponding eigenvalue \(\tilde{\lambda}_k\).

By employing the deflation projection, the gradient descent search space of the neural network is constrained to be orthogonal to the subspace spanned by \(\{\tilde{v}_1, \tilde{v}_2, \dots, \tilde{v}_{i-1}\}\).
This projection prevents the neural network output \(NN_{\mathcal{L}}(\theta_i)\) from converging to the invariant subspace formed by known eigenfunctions, thereby enhancing the orthogonality among the outputs of different neural networks \(NN_{\mathcal{L}}(\theta_1), \dots, NN_{\mathcal{L}}(\theta_{i-1})\).
On the one hand, this reduction in the search space accelerates the convergence toward the eigenfunctions \(v_i\); On the other hand, it improves the orthogonality among the neural network outputs, which reduces the error in predicting the eigenfunction \(\tilde{v}_{i}\).

In practice, we use the approximate eigenvalues and eigenfunctions with the smallest error in iterations to construct the deflation projection. This allows us to update adaptively, ensuring that the method remains effective when calculating more eigenfunctions.
                                                                  

\begin{figure*}[t]
    \centering
    \includegraphics[width=1\columnwidth]{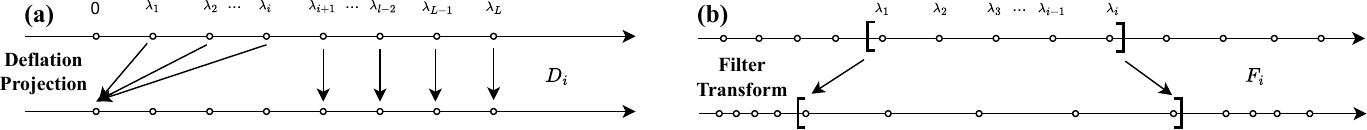}
    \caption{Illustration of the modules' impact on the operator spectrum: \textbf{(a)} Deflation projection sets the solved eigenvalues to zero, \textbf{(b)} Filter transform enlarges the target eigenvalue region and suppresses others.}
    \label{fg_method2}
\end{figure*}

\subsubsection{Filter Transform}
During the iterative process, we can obtain approximate eigenvalues \(\tilde{\lambda}_i\), and assume the corresponding true eigenvalues lie within \([\tilde{\lambda}_i - \xi, \tilde{\lambda}_i + \xi]\), where \(\xi\) is a tunable parameter, typically  \(\xi=0.1\) or \(\xi=1\). 
We employ a rational function-based {filter transform} on the original operator to simultaneously amplify the eigenvalues in these intervals and thus improve convergence performance.
Specifically, we transform
\begin{equation}
\mathcal{L} \;\;\longrightarrow\;\; \prod_{i_0=0}^{i-1} 
\bigl[(\mathcal{L} - (\tilde{\lambda}_{i_0} - \xi)I)
\,(\mathcal{L} - (\tilde{\lambda}_{i_0} + \xi)I)
\bigr]^{-1}.
\end{equation}
By contrast, the basic power method shift-invert strategy, \(\mathcal{L} \rightarrow (\mathcal{L} - \sigma I)^{-1}\), can be viewed as a special case of this more general construction.
In {\ourM}, we simulate the inverse operator via a suitably designed loss function. Therefore, the corresponding pseudocode filter function \(F\) removes the inverse, namely:
\begin{equation}
F_i(\mathcal{L})
\;=\;
\prod_{i_0=0}^{i-1}
\bigl[
(\mathcal{L} - (\tilde{\lambda}_{i_0} - \xi)I)
\,(\mathcal{L} - (\tilde{\lambda}_{i_0} + \xi)I)
\bigr].
\end{equation}
When \(\lambda_i\) lies within \([\tilde{\lambda}_i - \xi, \tilde{\lambda}_i + \xi]\), the poles \(\tilde{\lambda}_i \pm \xi\) make \(\|F_i(v_i)\|\) sufficiently large for the corresponding eigenvector \(v_i\). This repeated amplification causes that direction to dominate in the subsequent iterations, while eigenvalues outside those intervals are gradually suppressed. Consequently, the method converges more efficiently to the desired eigenvalues.

\section{Experiments}\label{exp_set_main}





We conducted comprehensive experiments to evaluate {\ourM}, focusing on: 
\begin{itemize}
    \item Solving multiple eigenvalues in the Harmonic eigenvalue problem.
    \item Solving the principal eigenvalue in the Schrödinger oscillator equation.
    \item Solving zero eigenvalues in the Fokker-Planck equation.
    \item Comparative experiment with traditional algorithms.
    \item The ablation experiments.
\end{itemize}

\textbf{Baselines}: For these experiments, we selected three learning-based methods for computing operator eigenvalues as our baselines:  1. PMNN~\citep{yang2023neural}; 2. NeuralEF~\citep{deng2022neuralef}; 3. NeuralSVD~\citep{ryu2024operator}. 
NeuralSVD and NeuralEF were implemented using the publicly available code provided by the authors of \href{https://github.com/jongharyu/neural-svd}{NeuralSVD}.
PMNN was implemented using the code provided by the authors of \href{https://github.com/SummerLoveRain/PMNN_IPMNN}{PMNN}. 
An introduction to related works can be found in Appendix~\ref{ap:related}.
In the comparative experiments with traditional algorithms, we chose the finite difference method (FDM)~\citep{leveque2007finite}.

\textbf{Experiment Settings}: To ensure consistency, all experiments were conducted under the same computational conditions.
For further details on the experimental environment and algorithm parameters, please refer to Appendices \ref{ap_comp_env} and \ref{exppar}.

\subsection{Harmonic Eigenvalue Problem}

Harmonic eigenvalue problems are common in fields such as structural dynamics and acoustics, and can be mathematically expressed as follows~\citep{yang2023neural, morgan1998harmonic}:
\begin{equation}
\begin{cases}
-\Delta v = \lambda v, & \text{in } \Omega, \\
v = 0, & \text{on } \partial\Omega.
\end{cases}
\end{equation}
Here \( \Delta \) denotes the Laplacian operator. We consider the domain $\Omega = [0,1]^D$ where \(D\) represents the dimension of the operator, and the boundary conditions are Dirichlet. In this setting, the eigenvalue problem has analytical solutions, with eigenvalues and corresponding eigenfunctions given by:
\begin{equation}
    \begin{split}
    &\lambda_{n_1, \dots, n_D} = \pi^2 \sum_{k=1}^D n_k^2, \quad n_k \in \mathbb{N}^+ , \quad u_{n_1, \dots, n_D}(x_1, \dots, x_k) = \prod_{k=1}^D \sin(n_k \pi x_k) .
    \end{split}
\end{equation}
These experiments aim to calculate the smallest four eigenvalues of the Harmonic operator in \(1,2\) and \(5\) dimensions. 
Since the PMNN model only computes the principal eigenvalue and cannot compute multiple eigenvalues simultaneously, it is not considered for comparison. 

\begin{table*}[t]
\centering
\caption{
Relative error comparison for eigenvalues of Harmonic operators. 
The first row lists the methods, the second row lists eigenvalue indices, and the first column lists the operator dimensions. The most accurate method is in bold.
}
\label{tab:exp1_1}
\begin{center}
\renewcommand{\arraystretch}{1.3}
\resizebox{\textwidth}{!}{%
\small
\begin{tabular}{ccccccccccccccc}
\toprule
\multirow{2}{*}{{Method}} &  \multicolumn{4}{c}{NeuralEF} & \ & \multicolumn{4}{c}{NeuralSVD} & \ & \multicolumn{4}{c}{{\ourM}} \\ 
  &  $\lambda_1$    & $\lambda_2$    & $\lambda_3$    & $\lambda_4$    & \ &  $\lambda_1$    & $\lambda_2$    & $\lambda_3$    & $\lambda_4$    & \ &  $\lambda_1$    & $\lambda_2$    & $\lambda_3$    & $\lambda_4$    \\ 
\cline{1-5} \cline{7-10} \cline{12-15}
Dim = 1 & 1.98e+0 & 2.53e+0 & 8.89e-1 & 7.81e-1 & \ & 1.72e-2 & {1.83e-2} & \textbf{1.88e-2} & 9.38e-1 & \ & \textbf{6.38e-11} & \textbf{8.61e-3} & {2.13e-2} & \textbf{4.05e-1} \\
Dim = 2 & 1.27e+0 & 1.66e+1 & 1.61e+2 & 1.31e+1 & \ & 7.23e-3 & 7.50e-3 & 7.82e-3 & 7.64e-3 & \ & \textbf{5.07e-7} & \textbf{1.01e-3} & \textbf{2.30e-3} & \textbf{2.53e-3} \\
Dim = 5 & 3.98e-1 & 7.24e+0 & 1.16e+1 & 1.12e+1 & \ & 2.88e-2 & 3.56e-2 & 3.36e-2 & 3.31e-2 & \ & \textbf{4.66e-6} & \textbf{1.60e-6} & \textbf{1.05e-6} & \textbf{2.20e-5} \\
\bottomrule
\end{tabular}
}
\end{center}
\end{table*}

\begin{wraptable}{r}{0.5\textwidth} 
\centering
\renewcommand{\arraystretch}{1.2}
\scriptsize
\caption{Residual comparison for eigenpairs for solving 5-dimensional Harmonic eigenvalue problems. The first row indicates the eigenpair index.}
\begin{tabular}{lcccc}
\toprule
Index &  $(v_1, \lambda_1)$    & $(v_2, \lambda_2)$    & $(v_3, \lambda_3)$    & $(v_4, \lambda_4)$ \\ 
\midrule
{NeuralSVD} & 1.90e+0 & 2.63e+0 & 2.70e+0 & 3.02e+0 \\
{NeuralEF} & 3.45e+1 & 2.69e+2 & 2.10e+1 & 1.83e+1 \\
{{\ourM}} & 4.864e-4 & 3.060e-3 & 5.980e-3 & 4.447e-3 \\
\bottomrule
\end{tabular}
\renewcommand{\arraystretch}{1}
\label{tab:exp1_2}
\end{wraptable}

Firstly, as demonstrated in Table~\ref{tab:exp1_1}, the accuracy of {\ourM} in most tasks is significantly better than that of existing methods.
This enhancement primarily stems from the deflation projection.
It effectively excludes solved invariant subspaces during the multi-eigenvalue solution process, thereby preserving the accuracy of multiple eigenvalues. 
This strongly validates the efficacy of our algorithm.

Secondly, in 5-dimension, {\ourM} consistently maintains a precision improvement of at least three orders of magnitude.
As shown in Table~\ref{tab:exp1_2}, this is largely due to the {\ourM} computed eigenpairs having smaller residuals (defined as \(||\mathcal{L}\tilde{v} - \tilde{\lambda} \tilde{v}||_2\), see Appendix~\ref{errmatr} for details), indicating that {\ourM} can effectively solve for accurate eigenvalues and eigenfunctions simultaneously.


\subsection{Schrödinger Oscillator Equation}

\begin{wraptable}{r}{0.5\textwidth} 
\centering
\vskip -0.1in
\renewcommand{\arraystretch}{1.2}
\scriptsize
\caption{Relative error comparison for the principal eigenvalues of oscillator operators. The first row lists the methods, and the first column lists the operator dimensions. The most accurate method is in bold.}
\begin{tabular}{lccccc}
\toprule
{Method} & {PMNN} & {NeuralEF} & {NeuralSVD} & {{\ourM}} \\ 
\midrule
{Dim = 1} & 2.34e-6 & 4.58e+0 & 1.25e-3 & \textbf{7.24e-7} \\ 
{Dim = 2} & 9.07e-5 & 3.74e+0 & 5.95e-2 & \textbf{2.35e-6} \\ 
{Dim = 5} & 1.57e-1 & 1.78e+0 & 3.32e-1 & \textbf{1.29e-1} \\ 
\bottomrule
\end{tabular}
\renewcommand{\arraystretch}{1}
\label{tab:exp2}
\end{wraptable}

The Schrödinger oscillator equation is a common problem in quantum mechanics, and its time-independent form is expressed as follows:
\begin{equation}
-\frac{1}{2} \Delta \psi + V \psi = E \psi, \quad \text{in } \Omega = [0,1]^D,
\end{equation}
where \( \psi \) is the wave function, \( \Delta \) represents the Laplacian operator indicating the kinetic energy term, \( V \) is the potential energy within \( \Omega \), and \( E \) denotes the energy eigenvalue~\citep{ryu2024operator, griffiths2018introduction}. 
This equation is formulated in natural units, simplifying the constants involved.
Typically, the potential \( V(x_1, \dots, x_D) = \frac{1}{2} \sum_{k=1}^D x_k^2 \) characterizes a multidimensional quadratic potential. The principal eigenvalue $E_0$ and corresponding eigenfunction $\psi_0$
are given by:
\begin{equation}
E_0 = \frac{D}{2}, \quad \psi_0(x_1, \dots, x_D) = \prod_{k=1}^D \left(\frac{1}{\pi}\right)^{\frac{1}{4}} e^{-\frac{x_k^2}{2}}.
\end{equation}
This experiment focuses on calculating the ground states of the Schrödinger equation in one, two, and five dimensions, i.e., the smallest principal eigenvalues.


Firstly, as shown in Table~\ref{tab:exp2}, the {\ourM} achieves significantly higher precision than existing algorithms in computing the principal eigenvalues of the oscillator operator.

Furthermore, the accuracy of {\ourM} surpasses that of PMNN. Both are designed based on the concept of the power method. When solving for the principal eigenvalue, the deflation projection loss may be considered inactive. This outcome suggests that the filter transform significantly enhances the accuracy.

\subsection{Fokker-Planck Equation}

The Fokker-Planck equation is central to statistical mechanics and is extensively applied across diverse fields such as thermodynamics, particle physics, and financial mathematics~\citep{yang2023neural, jordan1998variational, frank2005nonlinear}. It can be mathematically formulated as follows:
\begin{equation}
\begin{aligned}
-\Delta v - V \cdot \nabla v - \Delta V v = \lambda v, \quad \text{in } \Omega = [0, 2\pi]^D, \quad V(x) = \sin\left(\sum_{i=1}^D c_i \cos(x_i)\right).
\end{aligned}
\end{equation}
Here \( V(x)\) is a potential function with each coefficient \( c_i \) varying within \([0.1, 1]\), \( \lambda \) the eigenvalue, and \( v \) the eigenfunction. 
When the boundary conditions are periodic, there are multiple zero eigenvalues.

The eigenvalue at zero significantly impacts the numerical stability of the algorithm during iterative processes. 
This experiment investigates the computation of two zero eigenvalues for the Fokker-Planck equations with different parameters in 1, 2, and 5 dimensions. 
Due to the inherent limitation of the PMNN method, which can only compute a single eigenvalue, we restrict our analysis to calculating one eigenvalue when employing this approach.

\begin{table*}[h]
  \centering
  \caption{Absolute error comparison for the zero eigenvalues of Fokker-Planck operators across algorithms. The first row lists the methods, the second row lists the eigenvalue index, the first column lists the Fokker-Planck parameter, and the second column lists the operator dimensions. The most accurate method is in bold.}
  \renewcommand{\arraystretch}{1.4}
  \resizebox{1\textwidth}{!}{%
  \tiny
    \begin{tabular}{lccccccccccc}
    \toprule
          \multicolumn{2}{c}{{Method}} & \multicolumn{1}{c}{PMNN} & & \multicolumn{2}{c}{NeuralEF } & & \multicolumn{2}{c}{NeuralSVD} & & \multicolumn{2}{c}{STNet} \\
          $c_i$&  Dim     & \multicolumn{1}{c}{$\lambda_1$ } & & \multicolumn{1}{c}{$\lambda_1$ } & \multicolumn{1}{c}{$\lambda_2$} & & \multicolumn{1}{c}{$\lambda_1$ } & \multicolumn{1}{c}{$\lambda_2$} & & \multicolumn{1}{c}{$\lambda_1$ } & \multicolumn{1}{c}{$\lambda_2$} \\
          \cline{1-3} \cline{5-6} \cline{8-9} \cline{11-12}
    \multirow{3}[0]{*}{0.5} &  1  &   1.16e+0    & &   4.98e-2    &    1.05e+0   & &   7.19e-1    &    1.02e+0   & &    \textbf{1.17e-3}   &  \textbf{8.75e-3}\\
          &  2  &   1.11e+0    & &   6.71e-2    &    1.57e+0   & &   3.33e-1    &   1.03e+0    & &   \textbf{5.26e-6}    &  \textbf{5.14e-2}\\
          &  5  &    1.17e+0   & &   2.11e+0    &   9.17e+0    & &    2.11e+0   &  4.82e+0    & &   \textbf{ 3.90e-3}   &  \textbf{1.29e-1}\\ \cline{1-3} \cline{5-6} \cline{8-9} \cline{11-12}
    \multirow{3}[0]{*}{1.0} &  1 &   8.60e-1    & &   5.21e-1   &   5.95e-1    & &   2.73e-1    &  3.19e-1    & &    \textbf{3.86e-2}   &  \textbf{2.33e-1}\\
          &  2 &   8.30e-1    & &   6.58e-1   &   8.45e-1    & &    2.75e-1   &    3.94e-1   & &   \textbf{1.99e-2}    &  \textbf{3.91e-2}\\
          &  5 &    7.58e-1   & &   7.71e-1   &   1.02e+0    & &    2.01e-1   &   3.08e-1    & &   \textbf{5.64e-2}   &  \textbf{2.67e-2}\\ 
          \bottomrule
    \end{tabular}%
    }
  \label{tab:exp3}%
\end{table*}%

As indicated in Table~\ref{tab:exp3}, the {\ourM} algorithm significantly outperforms existing methods in computing the zero eigenvalues of the Fokker-Planck operator, effectively solving cases where the eigenvalue is zero. 
It is mainly due to the filter function, which performs a spectral transformation on the operator, converting the zero eigenvalue into other eigenvalues that are easier to calculate without changing the eigenvector.


\subsection{Comparative Experiment with Traditional Algorithms}



This experiment compares the accuracy of STNet and the traditional finite difference method (FDM) with a central difference scheme under varying grid densities~\citep{leveque2007finite}. Both methods aim to compute the four smallest eigenvalues ($\lambda_1$ to $\lambda_4$) of the 5D harmonic operator.

As shown in Table~\ref{tab:trad}, STNet consistently achieves higher accuracy than FDM across all eigenvalues. While FDM's precision improves with increasing grid density, this comes at the cost of exponentially higher memory consumption. For instance, as the grid points increases from $4^5$ to $45^5$, the relative error for $\lambda_1$ decreases from $3.20 \times 10^{-1}$ to $3.82 \times 10^{-3}$, but memory usage grows from $0.0001$ GB to $22.9$ GB. This demonstrates the inefficiency of FDM in high-dimensional problems, where maintaining accuracy requires an impractical number of grid points.

In contrast, STNet employs uniform random sampling instead of fixed grids, enabling it to achieve superior accuracy with fewer parameters and lower memory requirements. For example, with a grid density of $9^5$, STNet achieves relative errors of $2.32 \times 10^{-4}$ for $\lambda_4$, outperforming FDM by orders of magnitude. Its memory usage remains relatively low at $1.14$ GB, highlighting its scalability and efficiency in high-dimensional eigenvalue problems.
By leveraging the expressive power of neural networks, STNet effectively approximates eigenfunctions without relying on dense grids. This makes it a promising alternative to traditional numerical methods for solving high-dimensional operators.
A more detailed comparison with traditional algorithms is provided in Appendix~\ref{ap_discretization}.



\begin{table*}[t]
  \centering
\caption{Relative error comparison for eigenvalues of 5D Harmonic operators. The first column lists the methods, the second column lists grid points, and the third column lists memory consumption (in GB). Columns $\lambda_1$ to $\lambda_4$ list the eigenvalue relative errors.}
  \renewcommand{\arraystretch}{1.2}
\begin{tabular}{lcccccc}
\toprule
Method & Grid Points & Memory (GB) & $\lambda_1$ & $\lambda_2$ & $\lambda_3$ & $\lambda_4$ \\ \midrule
\multirow{7}{*}{FDM} 
    & $4^5$       & 1.01e-4   & 3.20e-1    & 1.04e+0    & 1.04e+0    & 1.04e+0    \\
    & $7^5$       & 1.86e-3   & 1.26e-1    & 4.15e-1    & 4.15e-1    & 4.15e-1    \\
    & $9^5$       & 6.74e-3   & 8.10e-2    & 2.68e-1    & 2.68e-1    & 2.68e-1    \\
    & $15^5$      & 9.05e-2   & 3.17e-2    & 1.05e-1    & 1.05e-1    & 1.05e-1    \\
    & $25^5$      & 1.19e+0   & 1.20e-2    & 4.00e-2    & 4.00e-2    & 4.00e-2    \\
    & $35^5$      & 6.48e+0   & 6.26e-3    & 2.09e-2    & 2.09e-2    & 2.09e-2    \\
    & $45^5$      & 2.29e+1   & 3.82e-3    & 1.27e-2    & 1.27e-2    & 1.28e-2    \\ \midrule
STNet  & $9^5$       & 1.14e+0   & 4.62e-5    & 1.59e-5    & 1.04e-5    & 2.32e-4    \\
\bottomrule
\end{tabular}%
\renewcommand{\arraystretch}{1}
\label{tab:trad}%
\end{table*}%

Traditional algorithms and neural network-based methods each excel in different scenarios. In low-dimensional problems, traditional algorithms are faster and can improve accuracy by increasing grid density. However, in high-dimensional problems, the number of required grid points grows exponentially with dimensionality, making grid-based methods impractical. For example, while a 2D problem requires $100^2$ grid points, a 5D problem requires $100^5$ grid points. Neural network-based algorithms, such as STNet, offer an effective solution to these challenges, providing high accuracy without the need for dense grids.


\subsection{Ablation Experiments}

\begin{wraptable}{r}{0.45\textwidth} 
\centering
\renewcommand{\arraystretch}{1.1}
\scriptsize
\caption{A comparison of different settings of {\ourM} for the 2-dimensional Harmonic eigenvalue problem. "w/o" denotes the absence of a specific module, "F" represents the filter transform module, and "D" indicates the deflation projection module.}
\begin{tabular}{@{}lccc@{}}
\toprule
                               & Index    & $\lambda$ Absolute Error & Residual \\ \midrule
\multirow{3}{*}{{\ourM}}          & $(v_1, \lambda_1)$ & 1.02e-5   & 4.12e-3 \\
                               & $(v_2, \lambda_2)$ & 3.04e-2   & 1.24e+1 \\
                               & $(v_3, \lambda_3)$ & 6.76e-1   & 1.43e+1 \\
                               \midrule
\multirow{3}{*}{w/o F} & $(v_1, \lambda_1)$ & 6.73e-5   & 1.35e-2 \\
                               & $(v_2, \lambda_2)$ & 5.10e-2   & 4.72e+1 \\
                               & $(v_3, \lambda_3)$ & 1.06e-1   & 1.70e+2 \\
                                \midrule
\multirow{3}{*}{w/o D} & $(v_1, \lambda_1)$ & 1.42e-5   & 4.12e-3 \\
                               & $(v_2, \lambda_2)$ & 2.96e+1   & 7.09e-3 \\
                               & $(v_3, \lambda_3)$ & 2.97e+1   & 1.09e-2 \\
                                \midrule
\multirow{3}{*}{w/o F and D}    & $(v_1, \lambda_1)$ & 6.73e-5   & 1.35e-2 \\
                               & $(v_2, \lambda_2)$ & 2.96e+1   & 1.45e-2 \\
                               & $(v_3, \lambda_3)$ & 2.97e+1   & 1.37e-2 \\
                               \bottomrule
\end{tabular}
\renewcommand{\arraystretch}{1}
\label{ablation}
\end{wraptable}

We conducted ablation experiments to validate the effectiveness of the deflation projection and filter transform modules. 
As shown in Table~\ref{ablation}, the results for "w/o F" indicate that removing the filter transform significantly reduces solution accuracy. 
In the cases of "w/o D" and "w/o F and D," while the residuals remain small, the absolute errors for \(\lambda_2\) and \(\lambda_3\) are notably larger compared to \(\lambda_1\). 
This suggests that without the deflation projection module, the network converges exclusively to the first eigenfunction \(v_1\) corresponding to \(\lambda_1\), failing to capture subsequent eigenfunctions. 
These findings underscore the critical roles of both modules: the filter transform enhances accuracy through spectral transformation. 
The deflation projection removes the subspace of already solved eigenfunctions from the search space, enabling the computation of multiple eigenvalues.

Additionally, experiments detailing the performance of {\ourM} as a function of model depth, model width, and the number of sampling points are provided in Appendix~\ref{Hyperexp}.
Runtimes and convergence processes for selected experiments are presented in Appendices~\ref{ap_complexity} and \ref{ap_convergence}.

\newpage
\section{Limitations and Conclusions}\label{limit}
In this paper, we propose {\ourM}, a learning-based approach for solving operator eigenvalue problems. 
By leveraging approximate eigenvalues and eigenvectors from iterative processes, {\ourM} uses spectral transformations to reformulate the operator, enhancing convergence properties. 
Experiments show that {\ourM} outperforms existing deep learning methods, achieving state-of-the-art accuracy.

While {\ourM} shows strong performance in solving operator eigenvalue problems, several limitations and avenues for future exploration remain:  
1. Although {\ourM} utilizes spectral transformations, the potential benefits of broader matrix preconditioning techniques have not been investigated.  
2. Its current scope is limited to linear operators, with future work needed to address nonlinear eigenvalue problems.


\section*{Acknowledgements}
The authors would like to thank all the anonymous reviewers for their insightful comments and valuable suggestions. 
This work was supported by the National Key R\&D Program of China under contract 2022ZD0119801, and the National Nature Science Foundations of China grants U23A20388, 62021001.

\newpage

\bibliographystyle{plain}
\bibliography{example_paper}

\newpage
\appendix

\section{Related work}\label{ap:related}

Recent advancements in applying neural networks to eigenvalue problems have shown promising results \cite{wang2024accelerating, dong2024accelerating, luo2024neural, liu2023promotinggeneralizationexactsolvers, liu2025apollomilp, liu2024milpstudio, 2025rome, kuang2024rethinking, liu2024deep, li2024machine, optitree, lv2025exploiting}. 
Innovations such as spectral inference networks (SpIN)~\citep{pfau2018spectral}, which model eigenvalue problems as kernel problem optimizations solved via neural networks. 
Neural eigenfunctions (NeuralEF)~\citep{deng2022neuralef}, which significantly reduce computational costs by optimizing the costly orthogonalization steps, are noteworthy. 
Neural singular value decomposition (NeuralSVD) employs truncated singular value decomposition for low-rank approximation to enhance the orthogonality required in learning functions~\citep{ryu2024operator}.

Another class of algorithms originates from optimizing the Rayleigh quotient. 
The deep Ritz method (DRM) utilizes the Rayleigh quotient for computing the smallest eigenvalues, demonstrating significant potential~\citep{yu2018deep}. 
Several studies have employed the Rayleigh quotient to construct variation-free functions, achieved through physics-informed neural networks (PINNs)~\citep{ben2023deep, ben2020solving}. 
Extensions of this approach include enhanced loss functions with regularization terms to improve the learning accuracy of the smallest eigenvalues~\citep{jin2022physics}. Additionally, \cite{han2020solving} reformulates the eigenvalue problem as a fixed-point problem of the semigroup flow induced by the operator, solving it using the diffusion Monte Carlo method.
The power method neural network (PMNN) integrates the power method with PINNs, using an iterative process to approximate the exact eigenvalues~\citep{yang2023neural} closely. 
While PMNN has proven effective in solving for a single eigenvalue~\citep{yang2023neural}, it has yet to be developed for computing multiple distinct eigenvalues simultaneously.



\section{Background Knowledge and Relevant Analysis}\label{ap_shift}

\subsection{Convergence Analysis of the Power Method}\label{power_method}

Suppose \( \bm{A} \in \mathbb{R}^{n \times n} \) and \( \bm{V}^{-1}\bm{AV} = \text{diag}(\lambda_1, \ldots, \lambda_n) \) with \( \bm{V} = \begin{bmatrix} \bm{v}_1 & \cdots & \bm{v}_n \end{bmatrix} \). Assume that
$|\lambda_1| > |\lambda_2| \geq \cdots \geq |\lambda_n|$. The pseudocode for the power method is shown below~\citep{golub2013matrix}:
\begin{algorithm2e}[!h]
    \caption{Power method for finding the largest principal eigenvalue of the matrix \( A \)}
    \label{alg:power_method}
    \SetAlgoLined
    \textbf{Given} \( \bm{A} \in \mathbb{R}^{n \times n} \) an \( n \times n \) matrix, an arbitrary unit vector \( x^{(0)} \in \mathbb{R}^n \), the maximum number of iterations \( k_{\text{max}} \), and the stopping criterion \( \epsilon \).\\
    \For{\( k = 1, 2, \ldots, k_{\text{max}} \)}{
        Compute \( \bm{y}^{(k)} = \bm{A}  \bm{x}^{(k-1)} \).\\
        Normalize \( \bm{x}^{(k)} = \frac{ \bm{y}^{(k)}}{\| \bm{y}^{(k)}\|} \).\\
        Compute the difference \( \delta = \|\bm{x}^{(k)} - \bm{x}^{(k-1)}\| \).\\
        \If{\( \delta < \epsilon \)}{
            Record the largest principal eigenvalue using the Rayleigh quotient.
            \[
            \lambda^{(k)} = \frac{\langle \bm{x}^{(k)}, \bm{A} \bm{x}^{(k)} \rangle}{\langle \bm{x}^{(k)}, \bm{x}^{(k)} \rangle}.
            \]
            The stopping criterion is met, and the iteration can be stopped.
        }
    }
\end{algorithm2e}

Let us examine the convergence properties of the power iteration. If
\[
\bm{x}^{(0)} = a_1 \bm{v}_1 + a_2 \bm{v}_2 + \cdots + a_n \bm{v}_n
\]
and \( \bm{v}_1 \neq 0 \), then
\[
\bm{A}^k \bm{x}^{(0)} = a_1 \lambda_1^k \left( \bm{v}_1 + \sum_{j=2}^{n} \frac{a_j}{a_1} \left( \frac{\lambda_j}{\lambda_1} \right)^k \bm{v}_j \right).
\]
Since \( \bm{x}^{(k)} \in \text{span}\{ \bm{A}^k \bm{x}^{(0)} \} \), we conclude that
\[
\text{dist} \left( \text{span} \{ \bm{x}^{(k)} \}, \text{span} \{ \bm{v}_1 \} \right) = O \left( \left( \frac{\lambda_2}{\lambda_1} \right)^k \right).
\]
It is also easy to verify that
\[
|\lambda_1 - \lambda^{(k)}| = O \left( \left( \frac{\lambda_2}{\lambda_1} \right)^k \right).
\]
Since \( \lambda_1 \) is larger than all the other eigenvalues in modulus, it is referred to as the largest principal eigenvalue. 
Thus, the power method converges if \( \lambda_1 \) is the largest principal and if \( \bm{x}^{(0)} \) has a component in the direction of the corresponding dominant eigenvector \( \bm{x}_1 \). 

In practice, the effectiveness of the power method largely depends on the ratio \( |\lambda_2|/|\lambda_1| \), as this ratio determines the convergence rate. Therefore, applying specific spectral transformations to the matrix to increase this ratio can significantly accelerate the convergence of the power method.

\subsection{Deflation Projection Details}~\label{A:Deflation}

Consider the scenario where we have determined the largest modulus eigenvalue, \( \lambda_1 \), and its corresponding eigenvector, \( \bm{v}_1 \), utilizing an algorithm such as the power method.
These algorithms consistently identify the eigenvalue of the largest modulus from the given matrix along with an associated eigenvector.
We ensure that the vector \( \bm{v}_1 \) is normalized such that \( \| \bm{v}_1 \|_2 = 1 \). The task then becomes computing the subsequent eigenvalue, \( \lambda_2 \), of the matrix \( \bm{A} \). A traditional approach to address this is through what is commonly known as a deflation procedure.
This technique involves a rank-one modification to the original matrix, aimed at shifting the eigenvalue \( \lambda_1 \) while preserving all other eigenvalues intact.
The modification is designed in such a way that \( \lambda_2 \) emerges as the eigenvalue with the largest modulus in the adjusted matrix. 
Consequently, the power method can be reapplied to this updated matrix to extract the eigenvalue-eigenvector pair \( \lambda_2, \bm{v}_2 \).





When the invariant subspace requiring deflation is one-dimensional, consider the following Proposition~\ref{prop1}. The propositions and proofs below are derived from~\cite{saad2011numerical} P90.
\begin{proposition}~\label{prop1} 
Let \( \bm{v}_1 \) be an eigenvector of \( \bm{A} \) of norm 1, associated with the eigenvalue \( \lambda_1 \) and let \( \bm{A}_1 \equiv \bm{A} - \sigma \bm{v}_1 \bm{v}_1^H \). Then the eigenvalues of \( \bm{A}_1 \) are \( \tilde{\lambda}_1 = \lambda_1 - \sigma \) and \( \tilde{\lambda}_j = \lambda_j, j = 2, 3, \dots, n \). Moreover, the Schur vectors associated with \( \tilde{\lambda}_j, j = 1, 2, 3, \dots, n \) are identical with those of \( \bm{A} \).
\end{proposition}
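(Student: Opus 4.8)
The plan is to prove the result through the Schur decomposition rather than an eigendecomposition, so that no diagonalizability of \(\bm{A}\) is needed — only the single eigenpair \((\lambda_1, \bm{v}_1)\) is assumed. The guiding observation is that the rank-one correction \(-\sigma \bm{v}_1 \bm{v}_1^H\) acts nontrivially only on the one-dimensional subspace \(\mathrm{span}\{\bm{v}_1\}\), so once \(\bm{v}_1\) is rotated onto the first coordinate axis the entire perturbation collapses into a change of a single matrix entry.

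First I would extend \(\bm{v}_1\) to a unitary matrix \(\bm{W} = [\,\bm{v}_1 \ \bm{W}_2\,]\) whose first column is \(\bm{v}_1\) and whose remaining columns form an orthonormal basis of \(\bm{v}_1^\perp\). Since \(\bm{A}\bm{v}_1 = \lambda_1 \bm{v}_1\) and \(\bm{W}^H \bm{v}_1 = \bm{e}_1\), the first column of \(\bm{W}^H \bm{A} \bm{W}\) equals \(\lambda_1 \bm{e}_1\), giving the block-triangular form
\[
\bm{W}^H \bm{A} \bm{W} = \begin{pmatrix} \lambda_1 & \bm{b}^H \\ \bm{0} & \bm{C} \end{pmatrix},
\]
where \(\bm{C} \in \mathbb{C}^{(n-1)\times(n-1)}\) carries the remaining eigenvalues \(\lambda_2,\dots,\lambda_n\) (counted with multiplicity), as can be read off from the block-triangular structure together with the spectral invariance of the unitary conjugation.

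Next I would conjugate \(\bm{A}_1\) by the same \(\bm{W}\). Using \(\bm{W}^H \bm{v}_1 \bm{v}_1^H \bm{W} = \bm{e}_1 \bm{e}_1^H\), I obtain
\[
\bm{W}^H \bm{A}_1 \bm{W} = \bm{W}^H \bm{A} \bm{W} - \sigma \bm{e}_1 \bm{e}_1^H = \begin{pmatrix} \lambda_1 - \sigma & \bm{b}^H \\ \bm{0} & \bm{C} \end{pmatrix}.
\]
The perturbation has touched only the \((1,1)\) entry while leaving \(\bm{C}\) intact, so reading off the block-triangular spectrum yields \(\tilde{\lambda}_1 = \lambda_1 - \sigma\) and \(\tilde{\lambda}_j = \lambda_j\) for \(j \ge 2\), which is the first claim. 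To finish, I would complete the reduction by taking a Schur decomposition \(\bm{C} = \bm{Q}_2 \bm{R}_2 \bm{Q}_2^H\) and setting \(\bm{U} = \bm{W}\,\mathrm{diag}(1, \bm{Q}_2)\); because the leading column of \(\mathrm{diag}(1, \bm{Q}_2)\) is \(\bm{e}_1\), the first column of \(\bm{U}\) is still \(\bm{v}_1\), and the same \(\bm{U}\) upper-triangularizes both \(\bm{A}\) and \(\bm{A}_1\), the two triangular forms differing only in the leading diagonal entry. Since the columns of \(\bm{U}\) are exactly the Schur vectors, they coincide for \(\bm{A}\) and \(\bm{A}_1\), giving the final claim.

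The main obstacle — indeed the only delicate point — is the Schur-vector identity: one must verify that a single unitary can be used for both matrices and that the ordering of eigenvalues along the diagonal is preserved. This is precisely what the Schur (rather than eigenvector) viewpoint secures, since both matrices share the identical trailing block \(\bm{C}\) after the first rotation, so any completion of the Schur form of \(\bm{C}\) serves both simultaneously; by contrast, an argument based on diagonalizing \(\bm{A}\) would break down whenever \(\bm{A}\) is defective, which the proposition does not exclude.
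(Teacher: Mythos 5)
Your proof is correct and takes essentially the same route as the paper's: both conjugate by a unitary whose first column is \(\bm{v}_1\), observe that the rank-one perturbation becomes \(-\sigma\,\bm{e}_1\bm{e}_1^H\) and therefore changes only the \((1,1)\) entry of the upper-triangular Schur factor, and conclude that the Schur vectors are unchanged. The only difference is presentational — you build the Schur factorization explicitly in two stages (rotate \(\bm{v}_1\) onto \(\bm{e}_1\), then triangularize the trailing block), whereas the paper invokes the full Schur factorization \(\bm{AV}=\bm{VR}\) with \(\bm{v}_1\) as first column directly.
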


\begin{proof}
Let \( \bm{A V} = \bm{V R} \) be the Schur factorization of \( \bm{A} \), where \( \bm{R} \) is upper triangular and \( \bm{V} \) is orthonormal. Then we have
\[
\bm{A}_1 \bm{V} = \left[ \bm{A} - \sigma \bm{v}_1 \bm{v}_1^\top \right] \bm{V} = \bm{V R} - \sigma \bm{v}_1 \bm{e}_1^\top = \bm{V} [ \bm{R} - \sigma \bm{e}_1 \bm{e}_1^\top ].
\]
Here, \( \bm{e}_1 \) is the first standard basis vector. The result follows immediately.
\end{proof}


According to Proposition~\ref{prop1}, once the eigenvalue \( \lambda_1 \) and eigenvector \( \bm{v}_1 \) are known, we can define the deflation projection matrix \( \bm{P}_1 = \bm{I} - \lambda_1 \bm{v}_1 \bm{v}_1^\top \) to compute the remaining eigenvalues and eigenvectors.

When deflating with multiple vectors, let \( \bm{q}_1, \bm{q}_2, \ldots, \bm{q}_j \) be a set of Schur vectors associated with the eigenvalues \( \lambda_1, \lambda_2, \ldots, \lambda_j \). We denote by \( \bm{Q}_j \) the matrix of column vectors \( \bm{q}_1, \bm{q}_2, \ldots, \bm{q}_j \). Thus,
$ \bm{Q}_j \equiv [\bm{q}_1, \bm{q}_2, \ldots, \bm{q}_j] $
is an orthonormal matrix whose columns form a basis of the eigenspace associated with the eigenvalues \( \lambda_1, \lambda_2, \ldots, \lambda_j \). An immediate generalization of Proposition~\ref{prop1} is the following~\citep{saad2011numerical} P94.

\begin{proposition}~\label{prop2} 
Let \( \bm{\Sigma}_j \) be the \( j \times j \) diagonal matrix
$ \bm{\Sigma}_j = \text{diag}(\sigma_1, \sigma_2, \ldots, \sigma_j)$,
and \( \bm{Q}_j \) an \( n \times j \) orthogonal matrix consisting of the Schur vectors of \( \bm{A} \) associated with \( \lambda_1, \ldots, \lambda_j \). Then the eigenvalues of the matrix
\[
\bm{A}_j \equiv \bm{A} - \bm{Q}_j \bm{\Sigma}_j \bm{Q}_j^\top,
\]
are \( \tilde{\lambda}_i = \lambda_i - \sigma_i \) for \( i \leq j \) and \( \tilde{\lambda}_i = \lambda_i \) for \( i > j \). Moreover, its associated Schur vectors are identical with those of \( \bm{A} \).
\end{proposition}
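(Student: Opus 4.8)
The plan is to mirror the proof of Proposition~\ref{prop1} almost verbatim, replacing the single eigenvector $\bm{v}_1$ by the block $\bm{Q}_j$ and the scalar shift $\sigma$ by the diagonal block $\bm{\Sigma}_j$. The one structural fact I will rely on is that the Schur vectors $\bm{q}_1,\dots,\bm{q}_j$ can be taken as the \emph{leading} columns of a full Schur factorization $\bm{A}\bm{V} = \bm{V}\bm{R}$, where $\bm{V} = [\,\bm{Q}_j \;\; \bm{W}\,]$ is $n\times n$ orthonormal, $\bm{R}$ is upper triangular, and the diagonal of $\bm{R}$ lists $\lambda_1,\dots,\lambda_n$ in this order. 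Concretely, this means $\bm{Q}_j = \bm{V}\bm{E}_j$, where $\bm{E}_j = [\,\bm{e}_1,\dots,\bm{e}_j\,]$ collects the first $j$ standard basis vectors.

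With this setup the computation is short. Using orthonormality of $\bm{V}$ we get $\bm{Q}_j^\top \bm{V} = \bm{E}_j^\top \bm{V}^\top \bm{V} = \bm{E}_j^\top$, so that
\[
\bm{A}_j \bm{V} = \bigl[\bm{A} - \bm{Q}_j \bm{\Sigma}_j \bm{Q}_j^\top\bigr]\bm{V} = \bm{V}\bm{R} - \bm{V}\bm{E}_j \bm{\Sigma}_j \bm{E}_j^\top = \bm{V}\bigl[\bm{R} - \hat{\bm{\Sigma}}\bigr],
\]
where $\hat{\bm{\Sigma}} = \bm{E}_j \bm{\Sigma}_j \bm{E}_j^\top = \operatorname{diag}(\sigma_1,\dots,\sigma_j,0,\dots,0)$ embeds $\bm{\Sigma}_j$ into the leading $j\times j$ block. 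Since $\hat{\bm{\Sigma}}$ is diagonal and $\bm{R}$ is upper triangular, $\bm{R} - \hat{\bm{\Sigma}}$ is again upper triangular, and its diagonal entries are exactly $\lambda_i - \sigma_i$ for $i \le j$ and $\lambda_i$ for $i > j$. Hence $\bm{V}^\top \bm{A}_j \bm{V} = \bm{R} - \hat{\bm{\Sigma}}$ is a Schur factorization of $\bm{A}_j$; reading the eigenvalues off the diagonal gives the claimed spectrum, and the Schur vectors of $\bm{A}_j$ are the columns of $\bm{V}$, i.e.\ identical with those of $\bm{A}$.

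I expect the only genuinely nontrivial point to be the very first step: justifying that $\bm{q}_1,\dots,\bm{q}_j$ really are the leading columns of some full Schur factorization in the prescribed order. This is where the hypothesis that the $\bm{q}_i$ are Schur vectors (rather than arbitrary eigenvectors) does the work: by definition they satisfy $\bm{A}\bm{Q}_j = \bm{Q}_j \bm{R}_j$ for the upper-triangular leading principal submatrix $\bm{R}_j$ of $\bm{R}$, so $\operatorname{span}\{\bm{q}_1,\dots,\bm{q}_j\}$ is an $\bm{A}$-invariant subspace. Extending the orthonormal system $\bm{q}_1,\dots,\bm{q}_j$ to a full orthonormal basis and expressing $\bm{A}$ in that basis then yields the block-triangular form that upgrades to the required Schur factorization. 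Once this ordering is in place, the remainder is the routine block analogue of Proposition~\ref{prop1}, and no eigenvalue-multiplicity or diagonalizability assumptions on $\bm{A}$ are needed.
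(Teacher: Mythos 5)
Your proposal is correct and follows essentially the same route as the paper: both apply $\bm{A}_j$ to the full Schur basis, use $\bm{Q}_j^\top\bm{V}=\bm{E}_j^\top$ to obtain $\bm{A}_j\bm{V}=\bm{V}\bigl[\bm{R}-\bm{E}_j\bm{\Sigma}_j\bm{E}_j^\top\bigr]$, and read the new eigenvalues off the diagonal of the resulting upper-triangular factor. You simply spell out the details (the embedding $\hat{\bm{\Sigma}}$, the ordering of the Schur vectors, and why the result follows) that the paper leaves implicit.
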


\begin{proof}
Let \( \bm{AU} = \bm{UR} \) be the Schur factorization of \( \bm{A} \). We have
\[
\bm{A}_j \bm{U} = \left[ \bm{A} - \bm{Q}_j \bm{\Sigma}_j \bm{Q}_j^\top \right] \bm{U} = \bm{U R} - \bm{Q}_j \bm{\Sigma}_j \bm{E}_j^\top,
\]
where \( \bm{E}_j = [\bm{e}_1, \bm{e}_2, \ldots, \bm{e}_j] \). Hence
\[
\bm{A}_j \bm{U} = \bm{U} \left[ \bm{R} - \bm{E}_j \bm{\Sigma}_j \bm{E}_j^\top \right]
\]
and the result follows.
\end{proof}

According to Proposition~\ref{prop2}, if \(\bm{A}\) is a normal matrix and the eigenvalues \( \lambda_1, \ldots, \lambda_j \) along with their corresponding eigenvectors \( \bm{v}_1, \ldots, \bm{v}_j \) are known, we can construct the deflation projection matrix \( \bm{P}_j = \bm{I} - \bm{V}_j \bm{\Sigma}_j \bm{V}_j^\top \) to compute the remaining eigenvalues and eigenvectors. Here, \( \bm{\Sigma}_j = \text{diag}(\sigma_1, \sigma_2, \ldots, \sigma_j) \) and \( \bm{V}_j = [\bm{v}_1, \bm{v}_2, \ldots, \bm{v}_j] \).

\subsection{Filtering Technique}\label{A:Filter}




The primary objective of filtering techniques is to manipulate the eigenvalue distribution of a matrix through spectral transformations~\citep{saad2011numerical}. This enhances specific eigenvalues of interest, facilitating their recognition and computation by iterative solvers. Filter transformation functions, \( F(x) \), typically fall into two categories: 
\begin{enumerate}
    \item {Polynomial Filters}, expressed as \( P(x) \), such as the Chebyshev filter~\citep{miao2021relaxed, banerjee2016chebyshev}.
    \item {Rational Function Filters}, often denoted as \( P(x) / Q(x) \), such as the shift-invert method~\citep{van2015rational, watkins2007matrix}. Below, we describe this strategy in detail.
\end{enumerate}

\paragraph{Shift-Invert Strategy}
The shift-invert strategy applies the transformation \( (A - \sigma I)^{-1} \) to the matrix \( A \), where \( \sigma \) is a scalar approximating a target eigenvalue, termed as shift. This operation transforms each eigenvalue \( \lambda \) of \( A \) into \( \frac{1}{\lambda - \sigma} \), amplifying those eigenvalues close to \( \sigma \) in the transformed matrix, making them larger and more distinguishable~\citep{watkins2007matrix}.

For instance, consider the power method, where the convergence rate is primarily governed by the ratio of the matrix's largest modulus eigenvalue to its second largest. Suppose matrix \( A \) has three principal eigenvalues: \( \lambda_1 = 10 \), \( \lambda_2 = 3 \), and \( \lambda_3 = 2 \). Our objective is to compute \( \lambda_1 \), the largest eigenvalue. In the original matrix \( A \), the convergence rate of the power method hinges on the spectral gap ratio, defined as:
\[
\text{Spectral Gap Ratio} = \frac{\lambda_1}{\lambda_2} \approx 3.33
\]
Applying the shift-invert transformation with \( \sigma = 9.5 \) strategically selected close to \( \lambda_1 \), the new eigenvalues \( \mu \) are recalculated as:
\[
\mu_i = \frac{1}{\lambda_i - \sigma}
\]
This results in transformed eigenvalues:
\[
\mu_1 = 2, \quad \mu_2 \approx -0.133, \quad \mu_3 \approx -0.125
\]
Under this transformation, \( \mu_1 = 2 \) emerges as the dominant eigenvalue in the new matrix, with the other eigenvalues significantly smaller. Consequently, the new spectral gap ratio escalates to:
\[
\text{New Spectral Gap Ratio} = \frac{2}{0.133} \approx 15.04
\]
This enhanced spectral gap notably accelerates the convergence of the power method in the new matrix configuration.

Filtering techniques are often synergized with techniques like the implicit restarts of Krylov algorithms~\citep{watkins2007matrix, golub2013matrix, Yang2025gnnspai}, employing matrix operation optimizations to minimize the computational demands of evaluating matrix functions. These methods enable more precise localization and computation of multiple eigenvalues spread across the spectral range, particularly vital in physical~\citep{salas2015spectral, banerjee2016chebyshev} and materials science~\citep{kohn1999nobel} simulations where these eigenvalues frequently correlate with the system's fundamental properties~\citep{winkelmann2019chase}.

\section{Details of Experimental Setup}\label{ap_setups}
\subsection{Experimental Environment}\label{ap_comp_env}
To ensure consistency in our evaluations, all comparative experiments were conducted under uniform computing environments. Specifically, the environments used are detailed as follows:
\begin{itemize}
    \item CPU: 72 vCPU AMD EPYC 9754 128-Core Processor
    \item GPU: NVIDIA GeForce RTX 4090D (24GB) 
\end{itemize}


\subsection{Experimental Parameters}\label{exppar}
NeuralSVD and NeuralEF were implemented using the publicly available code provided by the authors of NeuralSVD (\url{https://github.com/jongharyu/neural-svd}).
PMNN was implemented using the code provided by the authors of PMNN (\url{https://github.com/SummerLoveRain/PMNN_IPMNN}). 
No modifications were made to their original code for the experiments, except for changes to the target operator to be solved.
All experiments are fully reproducible. The detailed baseline experimental parameter files and code are provided in the supplementary materials.
\begin{itemize}
    \item NeuralSVD and NeuralEF: (Using the original paper settings)
    \begin{itemize}
        \item Optimizer: RMSProp with a learning rate scheduler.
        \item Learning rate: 1e-4, batch size: 128
        \item Neural Network Architecture: layers = [128,128,128]
        \item Laplacian regularization set to 0.01, with evaluation frequency every 10000 iterations.
        \item Fourier feature mapping enabled with a size of 256.
        \item Neural network structure: hidden layers of 128,128,128 using softplus as the activation function.
    \end{itemize}
    \item PMNN and STNet
    \begin{itemize}
        \item Optimizer: Adam
        \item Learning rate: 1e-4
        \item Neural Network Architecture: Assuming d is the dimension of the operator. For d = 1 or 2, layers = [d, 20, 20, 20, 20, 1].
        For d=5, layers = [d, 40, 40, 40, 40, 1]. For the else cases, layers = [d, 40, 40, 40, 40, 1].
        \item For the 1-dimensional problem, the number of points is $20,000$, with $400,000$ iterations. For the 2-dimensional problem, the number of points is $40,000 = 200\times200$, also with $400,000$ iterations. For the 5-dimensional problem, the number of points is $59,049 = 9^5$, with $500,000$ iterations.
    \end{itemize}
\end{itemize}


\subsection{Error Metrics}\label{errmatr}
\begin{itemize}
    \item Absolute Error: \\
    We employ absolute error to estimate the bias of the output eigenvalues of the model:
    \begin{equation*}
        \text{Absolute\ Error} =|\tilde{\lambda}-\lambda|.
    \end{equation*}
    Here \(\tilde{\lambda}\) represents the eigenvalue predicted by the model, while \(\lambda\) denotes the true eigenvalue.
     \item Relative Error: \\
    We employ relative error to estimate the bias of the output eigenvalues of the model:
    \begin{equation*}
        \text{Relative\ Error} = \frac{|\tilde{\lambda}-\lambda|}{|\lambda|}.
    \end{equation*}
    Here \(\tilde{\lambda}\) represents the eigenvalue predicted by the model, while \(\lambda\) denotes the true eigenvalue.
    \item Residual Error:\\
    To further analyze the error in eigenpair \( (\tilde{v}, \tilde{\lambda}) \) predictions, we use the following metric:
    \begin{equation*}
    \text{Residual\ Error} = ||\mathcal{L}\tilde{v}-\tilde{\lambda}\tilde{v}||_2.
    \end{equation*}
    Here, \(\tilde{v}\) represents the eigenfunction predicted by the model. When \(\tilde{\lambda}\) is the true eigenvalue and \(\tilde{v}\) is the true eigenfunction, the Residual Error equals $0$.
\end{itemize}


\newpage
\section{Supplementary Experiments}
\subsection{Analysis of Hyperparameters}\label{Hyperexp}

\textbf{Model Depth}:

\begin{table}[htp]\label{model depth}
\caption{Consider the 2-dimensional Harmonic problem, with the fixed layer width of 20, and compare the performance of STNet at different model layers. Other experimental details are the same as Appendix~\ref{exppar}. The error metric is absolute error.}
\begin{center}
\small
\begin{tabular}{lcccc}
\toprule
Eigenvalue & Layer 3 & Layer 4 & Layer 5 & Layer 6 \\ \midrule
$\lambda_1$ & 1.02e-5 & 1.42e-5 & 4.36e-6 & 1.06e-5 \\
$\lambda_2$ & 3.04e-2 & 2.96e-1 & 8.63e-1 & 8.21e-1 \\
$\lambda_3$ & 6.76e-2 & 4.17e-1 & 1.98e+0 & 1.17e+0 \\
$\lambda_4$ & 1.00e-1 & 2.00e+1 & 8.94e+1 & 3.81e+1 \\ \bottomrule
\end{tabular}\label{hp1}
\end{center}
\end{table}

\textbf{Model Width}:

\begin{table}[htp]\label{model width}
\caption{Consider the 2-dimensional Harmonic problem, with the fixed layer depth of 3, and compare the performance of STNet at different model widths. Other experimental details are the same as Appendix~\ref{exppar}. The error metric is absolute error.}
\begin{center}
\small
\begin{tabular}{lcccc}
\toprule
Eigenvalue & Width 10 & Width 20 & Width 30 & Width 40 \\ \midrule
$\lambda_1$ & 1.68e-6 & 1.42e-5 & 3.26e-5 & 1.57e-5 \\
$\lambda_2$ & 3.82e-1 & 2.96e-1 & 1.50e+0 & 2.67e+0 \\
$\lambda_3$ & 7.54e-1 & 4.17e-1 & 1.59e+0 & 7.93e+1 \\
$\lambda_4$ & 1.71e-1 & 2.00e+1 & 3.52e+2 & 1.50e+2 \\ \bottomrule
\end{tabular}\label{hp2}
\end{center}
\end{table}

\textbf{The Number of Points}:

\begin{table}[htp]
\caption{Consider the 2-dimensional Harmonic problem and compare
the performance of STNet at different numbers of points. Other experimental details are the same as Appendix~\ref{exppar}. The error metric is absolute error.}
\begin{center}
\small
\begin{tabular}{lcccc}
\toprule
Eigenvalue & 20000 Points & 30000 Points & 40000 Points & 50000 Points \\ \midrule
$\lambda_1$ & 1.11e-5 & 4.40e-5 & 1.42e-5 & 4.94e-6 \\
$\lambda_2$ & 1.25e+0 & 3.58e-1 & 2.96e-1 & 2.53e-1 \\
$\lambda_3$ & 1.61e+0 & 1.70e-1 & 4.17e-1 & 3.73e-1 \\ \bottomrule
\end{tabular}\label{hp3}
\end{center}
\end{table}


The influence of model depth, model width, and the number of points on STNet is illustrated in Tables~\ref{hp1}, \ref{hp2}, and \ref{hp3}, respectively. Experimental results indicate that STNet is relatively unaffected by changes in model depth and model width. However, it is significantly influenced by the number of points, with performance improving as more points are used.

\newpage

\subsection{Computational Complexity and Runtimes}\label{ap_complexity}

While neural network-based methods may exhibit a larger computational overhead than traditional methods in low-dimensional problems, their strength lies in high-dimensional settings where they offer a significant precision and computational cost advantages. 
To illustrate this, we provide a runtime comparison for the 5D Harmonic problem in Table~\ref{tab:runtime_5d_harmonic}.


\begin{table}[h]
\centering
\caption{Runtime comparison for the 5D Harmonic problem. "STNet (partial)" refers to the time taken to achieve an error of $1.00 \times 10^{-3}$.}
\label{tab:runtime_5d_harmonic}
\begin{tabular}{@{}lcc@{}}
\toprule
{Method} & {Computation Time (s)} & {Principal Eigenvalue Absolute Error} \\ \midrule
FDM ($45^5$ points) & $\sim$ 1.9e+4 & 3.8e-3 \\
STNet (partial) & $\sim$ 1.3e+3 & 1.0e-3 \\
STNet (converged) & $\sim$ 3.1e+4 & 4.6e-5 \\
NeuralEF (converged) & $\sim$ 1.2e+4 & 3.9e-1 \\
NeuralSVD (converged) & $\sim$ 1.2e+4 & 2.8e-2 \\ \bottomrule
\end{tabular}
\end{table}

The results in Table~\ref{tab:runtime_5d_harmonic} highlight two key points:

1. STNet achieves a lower error more rapidly than other learning-based methods and converges to a significantly more accurate solution than NeuralEF and NeuralSVD.

2. The total time to full convergence for STNet is longer. This is an inherent trade-off of the Filter Transform, which iteratively refines the problem to unlock further optimization, thus requiring more iterations to reach its high-precision potential.

\subsection{Partial Convergence Process}\label{ap_convergence}

we provide the convergence process for the Harmonic problem in Table~\ref{tab:convergence_harmonic}. The table shows the absolute error for the first two eigenvalues ($\lambda_1, \lambda_2$) at different iteration counts across dimensions 1, 2, and 5.


\begin{table}[h!]
\centering
\caption{Absolute Error vs. Iterations for the Harmonic problem. Empty cells indicate that convergence for that eigenvalue was already achieved in a prior stage.}
\label{tab:convergence_harmonic}
\begin{tabular}{lcccccccc}
\toprule
{Dim} & {Eigenvalue} & {Iter: 100} & {500} & {1000} & {40000} & {80000} & {130000} & {180000} \\ \midrule
\multirow{2}{*}{d=1} & $\lambda_1$ & 1.6e-2 & 3.5e-3 & 3.2e-4 & 2.6e-4 & 2.7e-5 & 1.2e-3 & \\
 & $\lambda_2$ & 1.0e+4 & 2.4e+3 & 2.8e+3 & 8.0e+2 & 9.8e-1 & 8.8e-3 & \\ \midrule
\multirow{2}{*}{d=2} & $\lambda_1$ & 6.9e-2 & 6.9e-3 & 5.9e-3 & 5.3e-6 & & & \\
 & $\lambda_2$ & 6.3e01 & 5.9e+0 & 4.2e+0 & 5.1e-2 & & & \\ \midrule
\multirow{2}{*}{d=5} & $\lambda_1$ & 5.8e-2 & 6.6e-3 & 4.8e-4 & 7.8e-3 & 3.9e-3 & 6.2e-3 & 3.9e-3 \\
 & $\lambda_2$ & 6.3e+3 & 4.1e+2 & 1.5e+2 & 2.9e+0 & 5.5e+0 & 3.8e+0 & 1.3e-1 \\ \bottomrule
\end{tabular}%
\end{table}


\subsection{Impact of Discretization on Traditional Methods}~\label{ap_discretization}

A critical point of comparison is understanding the primary sources of error in different methodologies. For traditional methods like the Finite Difference Method (FDM), the dominant bottleneck in high-dimensional problems is not the algebraic eigenvalue solver's accuracy but the fundamental discretization error introduced when mapping a continuous operator onto a discrete grid. 
While advanced solver techniques such as deflation or filtering can improve the precision of the matrix eigenvalue solution, they cannot overcome the inherent error from the initial discretization. Reducing this error necessitates a denser grid, which leads to an exponential increase in memory and computational cost, a challenge known as the "curse of dimensionality."

To empirically validate this, we conducted a experiment for the 5D Harmonic problem, comparing STNet against FDM enhanced with Richardson extrapolation—a technique designed to improve accuracy. The results, presented in Table~\ref{tab:fdm_advanced_comparison}, show that even when FDM is augmented with advanced techniques, it cannot match the accuracy of STNet.


\begin{table}[h!]
\centering
\caption{Comparison with advanced FDM techniques for the 5D Harmonic problem. }
\label{tab:fdm_advanced_comparison}
\resizebox{\textwidth}{!}{%
\begin{tabular}{lcccccc}
\toprule
{Method} & {Grid/Points} & {Memory (GB)} & {$\lambda_1$ Error} & {$\lambda_2$ Error} & {$\lambda_3$ Error} & {$\lambda_4$ Error} \\ \midrule
FDM (Richardson Extrap.) & $25^5$ & 3.4e+0 & 8.6e-3 & 2.9e-2 & 2.9e-2 & 2.9e-2 \\
FDM (with deflation \& filtering; Richardson Extrap.) & $25^5$ & 3.4e+0 & 8.6e-3 & 2.9e-2 & 2.9e-2 & 2.9e-2 \\ \midrule
FDM (Central Difference) & $45^5$ & 2.2e+1 & 3.8e-3 & 1.2e-2 & 1.2e-2 & 1.2e-2 \\
FDM (with deflation \& filtering; Central Difference) & $45^5$ & 2.2e+1 & 3.8e-3 & 1.2e-2 & 1.2e-2 & 1.2e-2 \\ \midrule
{STNet} & $\mathbf{9^5}$ & \textbf{1.1e+0} & \textbf{4.6e-5} & \textbf{1.6e-5} & \textbf{1.0e-5} & \textbf{2.3e-4} \\ \bottomrule
\end{tabular}%
}
\end{table}

This experiment reinforces that neural network-based methods, which rely on mesh-free random sampling, are inherently better suited for high-dimensional problems, as they effectively bypass the discretization step that limits traditional grid-based approaches.

While traditional methods are often superior for low-dimensional problems, their practicality diminishes rapidly in high dimensions due to the "curse of dimensionality".
To provide a more granular analysis, Table~\ref{tab:convergence_vs_time} details the convergence time versus accuracy for the 5D Harmonic problem. Our traditional baseline uses `scipy.eigs', which is based on the highly-optimized ARPACK library. The data clearly shows that FDM's accuracy is capped by the grid resolution, and STNet can achieve higher precision. 
The total time for STNet is longer, but it reaches a level of accuracy that is infeasible for FDM under reasonable memory constraints.

\begin{table}[h!]
\centering
\caption{Detailed convergence analysis for the 5D Harmonic problem ($\lambda_1$). The table shows the time required to reach specific absolute error thresholds. A dash (-) indicates that the method could not reach the specified error due to limitations imposed by grid resolution.}
\label{tab:convergence_vs_time}
\resizebox{\textwidth}{!}{%
\begin{tabular}{@{}lcccccccc@{}}
\toprule
{Method} & {Grid/Sample Pts} & {Memory (GB)} & {Final Abs. Error} & {Time to Error < 1e+0} & {Time to Error < 1e-1} & {Time to Error < 1e-2} & {Time to Error < 1e-3} & {Total Time} \\ \midrule
\multirow{4}{*}{FDM+eigs} & $15^5$ ($\sim$ 8e+5) & 9.1e-2 & 3.2e-2 & 1.3s & 4.0s & 1.1e+1s & - & 1.1e+1s \\
 & $25^5$ ($\sim$ 1e+7) & 1.2e+0 & 1.2e-2 & 2.5e+1s & 3.2e+1s & 8.1e+1s & - & 1.1e+2s \\
 & $35^5$ ($\sim$ 5e+7) & 6.5e+0 & 6.3e-3 & 1.8e+2s & 2.5e+2s & 4.9e+2s & 1.9e+3s & 1.9e+3s \\
 & $45^5$ ($\sim$ 2e+8) & 2.3e+1 & 3.8e-3 & 9.4e+2s & 3.1e+3s & 6.7e+3s & 1.9e+4s & 1.9e+4s \\ \midrule
\multirow{4}{*}{STNet} & 5.0e+4 & 9.5e-1 & 3.4e-4 & 7.2s & 1.4e+1s & 1.7e+2s & 1.3e+3s & 1.0e+5s \\
 & $9^5$ ($\sim$ 6e+4) & 1.1e+0 & 4.6e-5 & 9.6s & 1.9e+1s & 1.5e+2s & 1.3e+3s & 3.1e+4s \\
 & 7.0e+4 & 1.3e+0 & 6.5e-5 & 5.7s & 1.1e+1s & 1.4e+2s & 3.0e+2s & 5.4e+4s \\
 & 8.0e+4 & 1.5e+0 & 2.5e-5 & 8.9s & 1.6e+1s & 1.6e+2s & 9.8e+2s & 2.9e+4s \\ \bottomrule
\end{tabular}%
}
\end{table}

This detailed analysis validates our conclusion that for large-scale, high-dimensional problems, STNet holds a significant advantage in both achievable accuracy and memory efficiency. Its core strength lies in its mesh-free random sampling strategy, which leverages the expressive power of neural networks to approximate eigenfunctions directly, thereby circumventing the need to construct and store a massive matrix.

\end{document}